\let\NAT@parse\undefined
\renewcommand{\vec}{\mathbf}
\newcommand{\trs}{\top}
\newcommand{\RR}{\mathbb R}
\newcommand{\A}{\mathcal A}
\newcommand{\F}{\mathcal F}
\newcommand{\G}{\mathcal G}
\newcommand{\N}{\mathcal N}
\renewcommand{\O}{\mathcal O}
\newcommand{\R}{\mathcal R}
\renewcommand{\S}{\mathcal S}
\newcommand{\V}{\mathcal V}
\newcommand{\X}{\mathcal X}
\newcommand{\Y}{\mathcal Y}
\crefname{equation}{}{}
\DeclareMathOperator*{\argmax}{arg\,max}
\let\oldforall\forall
\let\forall\undefined
\DeclareMathOperator{\forall}{\oldforall}
\newtheorem{lemma}{Lemma}
\newtheorem{theorem}{Theorem}
\newtheorem{problem}{Problem}
\newtheorem{remark}{Remark}[section]
\newtheorem{definition}{Definition}
\newtheorem{assumption}{Assumption}
\newtheorem{proof}{Proof}
\renewcommand{\secref}[1]{Section~\ref{#1}}
\renewcommand{\figref}[1]{Fig.~$\ref{#1}$}
\renewcommand{\algref}[1]{Algorithm~$\ref{#1}$}
\newcommand{\linref}[1]{line~$\ref{#1}$}
\newcommand{\linsref}[2]{lines~$\ref{#1}$-$\ref{#2}$}
\renewenvironment{proof}{\begin{IEEEproof}}{\end{IEEEproof}\ignorespacesafterend}
\long\def\ignorethis#1{}
\title{Distributed Resilient Submodular Action Selection in Adversarial Environments}
\author{Jun Liu$^{1}$, Lifeng Zhou$^{2}$, Pratap Tokekar$^{3}$, and Ryan K. Williams$^{1}$
\thanks{This work was supported by the National Institute of Food and Agriculture under Grant 2018-67007-28380, the Office of Naval Research under Grant N00014-18-1-2829, and the National Science Foundation under Grant 1943368.}
\thanks{$^{1}$The authors are with the Department of Electrical and Computer Engineering, Virginia Polytechnic Institute and State University, Blacksburg, VA 24061 USA (e-mail: junliu@vt.edu; rywilli1@vt.edu).}
\thanks{$^{2}$The author was with the Department of Electrical
and Computer Engineering, Virginia Polytechnic Institute and State University, Blacksburg, VA 24061 USA when part of the work was completed. He is currently with the GRASP Laboratory,
University of Pennsylvania, Philadelphia, PA 19104 USA (e-mail: lfzhou@seas.upenn.edu).}
\thanks{$^{3}$The author is with the Department of Computer Science, University of Maryland, College Park, MD 20782 USA (e-mail: tokekar@umd.edu).}
\thanks{Digital Object Identifier (DOI): see top of this page.}}
\begin{document}

\bstctlcite{IEEEexample:BSTcontrol}
\maketitle
\thispagestyle{empty}
\pagestyle{empty}

\begin{abstract}
    In this letter, we consider a distributed submodular maximization problem for multi-robot systems when attacked by adversaries. One of the major challenges for multi-robot systems is to increase resilience against failures or attacks. This is particularly important for distributed systems under attack as there is no central point of command that can detect, mitigate, and recover from attacks. Instead, a distributed multi-robot system must coordinate effectively to overcome adversarial attacks. In this work, our distributed submodular action selection problem models a broad set of scenarios where each robot in a multi-robot system has multiple action selections that may fulfill a global objective, such as exploration or target tracking. To increase resilience in this context, we propose a \emph{fully} distributed algorithm to guide each robot's action selection when the system is attacked. The proposed algorithm guarantees performance in a worst-case scenario where up to a portion of the robots malfunction due to attacks. Importantly, the proposed algorithm is also consistent, as it is shown to converge to the same solution as a centralized method. Finally, a distributed resilient multi-robot exploration problem is presented to confirm the performance of the proposed algorithm. 
\end{abstract}

\begin{IEEEkeywords}
    Distributed robot systems, planning, scheduling and coordination, multi-robot systems, resilient, submodular optimization.
\end{IEEEkeywords}

\section{Introduction}
\label{sec:intro}

Resilience is a crucial property for multi-robot systems.
Consider, for example, a multi-robot exploration application where each robot selects exploration actions from an action candidate set, e.g., a motion primitive set. In adversarial environments, sensors may fail or get attacked, and depending on the contributions of the attacked sensors, the exploration performance may be seriously affected. This problem is more challenging in distributed multi-robot systems since each robot can only share its local information with neighbors to maximize the system reward subject to adversarial influences.

This letter focuses on a scenario where the robots in a distributed multi-robot system need to work together to guard the system against worst-case attacks. By worst-case attacks, we refer to the case where the system may have up to $K$ sensor failures.
Robots operating in adversarial scenarios may get cyber-attacked or face failures, resulting in a temporary withdrawal of robots from the task (e.g., because of temporary deactivation of their sensors, blockage of their field of view). For example, in \figref{fig: application}, each robot in the system is equipped with a downward-facing camera to explore an environment with different weights in different areas, where there is one robot whose \emph{sensor} is blocked by an attacker. It is worth mentioning that robot failure and sensor failure are different. If a sensor is attacked, the corresponding robot may not know this attack and still perform other tasks/communications as planned.

\begin{figure}[!tbp]
    \centering
    \includegraphics[width=2.9in]{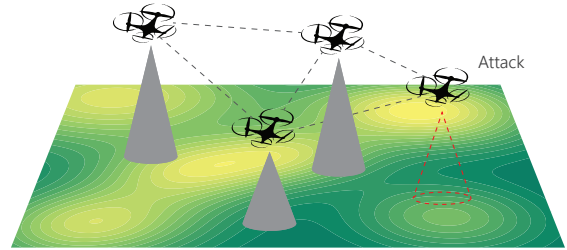}
    \caption{In a multi-robot environmental exploration application, the robots are mounted with downward-facing cameras to explore the environment.  An attacker blocks one of the robots' cameras (red).}
    \label{fig: application}
\end{figure}

\emph{Related Work:} The resilience of multi-robot systems has received attention recently (see a comprehensive survey in \cite{zhou2021multi}). In \cite{saulnier2017resilient}, the authors presented a resilient formation control algorithm that steers a team of mobile robots to achieve desired flocking even though some team members are non-cooperative (or adversarial) and broadcast deceptive signals. Deceptive or spoofing attacks were also considered in the wireless communication~\cite{gil2017guaranteeing} and the target state estimation~\cite{mitra2019resilient} of multi-robot teams. Another type of attack, called masquerade attack, was studied in a multi-agent path-finding problem~\cite{wardega2019masquerade}. In this letter, we instead focus on defending multi-robot systems against the denial-of-service (DoS) attacks that can compromise the sensors' functionality~\cite{raymond2008denial}. For example, a polynomial-time resilient algorithm to counter adversarial denial-of-service (DoS) attacks or failures in a submodular maximization problem was proposed in \cite{tzoumas2017resilient}. Meanwhile, resilient coordination algorithms have been designed to cope with adversarial attacks in multi-robot target tracking \cite{zhou2018resilient}, the orienteering problem \cite{shi2020robust}, etc. In \cite{zhou2020distributed}, the authors proposed to solve the centralized resilient target tracking problem \cite{zhou2018resilient} in a distributed way. This method partitions robots into subgroups/cliques, and then the subgroups perform a centralized algorithm in parallel to counter the worst-case attacks. Thus, even if there exist communications between subgroups, these available communications are not utilized because each subgroup operates independently. Therefore, the proposed algorithm in \cite{zhou2020distributed} has a worse approximation bound than its centralized counterpart.

The action selection problem falls into the combinatorial robotics application domain.
The authors in \cite{choi2009consensus} proposed a consensus-based method for the task allocation problem. In \cite{qu2019distributed}, the authors used matroids to model the task allocation constraints and provided a distributed approach with $1/2$ optimality ratio. In \cite{williams2017decentralized}, the authors extended the use of matroids to abstract task allocation constraints modeling and demonstrated the suboptimality through a sequential auction method in a decentralized scenario. In \cite{liu2020monitoring,liu2020coupled}, the authors applied submodular and matroids techniques in a multi-robot intermittent environmental monitoring problem, where the deployment actions are selected based on the environmental process. In \cite{corah2019distributed}, the authors utilized the submodularity of a mutual information function to prove the performance of a distributed multi-robot exploration method, while synchronization is needed. The authors in \cite{liu2019submodular} considered two coupled action selection problems in an environmental monitoring application, where the selected tasks have an impact on the monitored environmental process behavior. In \cite{grimsman2018impact}, the authors studied how the information from other robots impact the decisions of a multi-robot system in distributed settings. Similarly, the submodular properties were also utilized in the consensus problem \cite{mackin2018submodular}, the leader selection problem \cite{clark2013supermodular}, etc.
However, resilience is not the primary consideration, especially when the system is under worst-case attacks. In this letter, we propose a fully distributed resilient algorithm that requires no central point of command to solve the action selection problem in adversarial environments. The proposed distributed resilient method can perform as well as the corresponding centralized algorithm when subject to worst-case adversarial attacks.

\emph{Contributions}: The contributions are as follows:
\begin{enumerate}
    \item We formulate a fully distributed resilient submodular action selection problem.
    \item We demonstrate how to solve the problem in a \emph{fully} distributed manner where each robot computes its action only and shares the decision with its neighbors to achieve convergence with performance guarantees.
    \item We prove and evaluate the proposed algorithm's performance is consistent, as it is shown to converge to the same solution as a centralized method.
\end{enumerate}

\emph{Organization:}
In \secref{sec:problem}, we introduce preliminaries followed by the problem formulation. In \secref{sec:algorithm}, we use two subsections to demonstrate the two phases of the proposed algorithm. Then, the performance analysis of the proposed algorithm is shown in \secref{sec:performance}. In \secref{sec:simulation}, numerical evaluation is performed in a multi-robot exploration problem. We then close the letter in \secref{sec: conclusion}.

\section{Preliminaries and Problem Formulation}
\label{sec:problem}

\subsection{Submodular Set Functions}
\label{ssec: submodular}

\begin{definition}[\textit{Submodularity} \cite{nemhauser1978analysis}]
    A submodular function $f: 2^\V \mapsto \RR$ is a set function, satisfying the property $f(\X \cup \{v\}) - f(\X) \ge f(\Y \cup \{v\}) - f(\Y)$,
    where $\V$ is the ground set, $\X \subseteq \Y \subseteq \V$, and $v \in \V \setminus \Y$.
\end{definition}

The power set $2^\V$ is the set of all subsets of $\V$, including $\emptyset$ and $\V$ itself. A set function is monotone non-decreasing if $f(\X) \le f(\Y)$ when $\X \subseteq \Y \subseteq \V$. Submodularity appears in a wide variety of robotics applications. We refer the reader to \cite{krause2014submodular,schrijver2003combinatorial} for more details.

\begin{definition}[\textit{Marginal gain}]
    For a set function $f: 2^\V \mapsto \RR$, let the marginal gain of adding element $v \in \V$ into set $\X \subseteq \V$ be $f_\X(v) \triangleq f(\X \cup \{v\}) - f(\X)$\footnote{We will use $v$ to represent $\{v\}$ if there is no confusion.}.
\end{definition}

\begin{definition}[\textit{Curvature} \cite{conforti1984submodular}]
    Let $f: 2^\V \mapsto \RR$ be a monotone non-decreasing submodular function, we define the curvature of $f(\cdot)$ as $c_f = 1- \min_{v \in \V} \frac{f(\V) - f(\V \setminus \{v\})}{f(v)}$, where $v \in \V$.
\end{definition}

This curvature represents the submodularity level of $f(\cdot)$. It holds that $0 \le c_f \le 1$. If $c_f = 0$, then $f(\cdot)$ is a \emph{modular} function and $f(\X \cup \{v\}) - f(\X) = f(v)$. If $c_f = 1$, then $f(\X \cup \{v\}) - f(\X) = 0$, where $\X \subseteq \V$ and $v \in \V \setminus \X$. 


\subsection{Problem Formulation}

\begin{figure}[!tbp]
    \centering
    \includegraphics[width=2.9in]{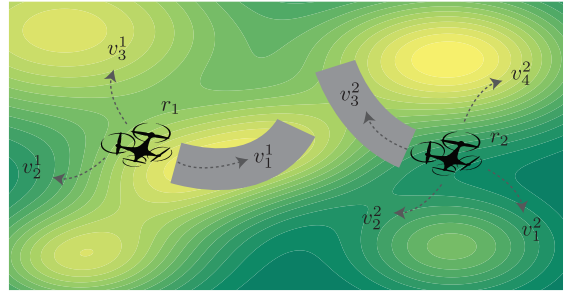}
    \caption{Each robot chooses one motion primitive from its candidate motion primitive set to explore a region of the environment.}
    \label{fig: example}
\end{figure}

\textit{Robots, actions, and rewards:} Consider a team of $N$ robots denoted by $\R = \{1, \ldots, N\}$. Each robot is equipped with one sensor.
There is a (undirected) communication graph\footnote{It is worth noting that the proposed algorithm also works for directed communication graphs.} $\G=(\mathcal{R},\mathcal{E})$ associated with nodes $\mathcal{R}$, and edges $\mathcal{E}$ such that $(i,j)\in\mathcal{E}$ if $i$ and $j$ can communicate with each other. We denote by $\N_i$ the neighbors of robot $i$. The diameter $d(\G)$ of the communication $\G$ is the greatest length of the shortest paths between vertices. The communication can be synchronized or asynchronized. The performance analysis will be based on synchronized communication.
Each robot $i \in \R$ has a set of candidate actions $\V_i$ and can only choose one action from $\V_i$ at each execution step.
For example, in motion planning using motion primitives, the robot can only choose one motion primitive from its candidate motion primitives at a time. As shown in Fig.~\ref{fig: example}, robot 1 chooses action $v_1^1$ from its available action set $\V_1 = \{v_1^1,v_2^1,v_3^1\}$ and robot 2 chooses action $v_3^2$ from its available action set $\V_2 = \{v_1^2, v_2^2, v_3^2, v_4^2\}$, yielding the shaded explored area. We denote by $\mathcal{V} \triangleq \bigcup_{i\in \mathcal{R}} \mathcal{V}_i$ the ground set containing all robots' possible actions. There is an reward associated with any action. Also, the reward associated with action $v_1^1$ is the gray explored area. The function value $f(\S)$ or the combined reward associated with $v_1^1$ and $v_3^2$ is the explored gray areas.

\textit{Objective Function:} We use a non-decreasing and submodular function $f: 2^{\mathcal{V}} \to \mathbb{R}$ to model the quality of each valid action set $\mathcal{S} \subseteq \mathcal{V}$ since the diminishing return property of objective functions is common in robotics. For example, in Fig.~\ref{fig: example}, $f(\cdot)$ measures the extent of the joint area explored by chosen actions $\mathcal{S}=\{v_1^1, v_3^2\}$ (represented by the gray areas), which is a well-known coverage function that exhibits the submodularity property \cite{krause2014submodular}.

\begin{assumption}[\textit{Attacks}]
    We assume the robots encounter worst-case attacks that result in their sensor DoS failures. Thus, robots can still communicate with their neighbors even though their sensors are denied or blocked. The maximum number of anticipated attacks is upper bounded by $K, (K\leq N)$, where $N$ is the number of robots.
\end{assumption}

\begin{problem}[\textit{Distributed resilient multi-robot action selection in adversarial environments}]\label{pro:dis_resi_sub}
The robots, by communicating actions and rewards with their neighbors over the communication graph $\G$, choose action set $\mathcal{S}$ (per robot per action) to maximize a submodular objective $f(\cdot)$ against $K$ worst-case attacks. That is
\begin{align} \label{eq:dis_resi_sub}
    \begin{split}
        \underset{\S \subseteq \V}{\text{maximize}} \quad &\min_{\F \subseteq \S} f(\mathcal{S}\setminus\mathcal{F})\\
        \text{subject to}\quad &|\mathcal{S}\cap \mathcal{V}_i|= 1, \forall i\in \mathcal{R}, \\
        & |\mathcal{F}| \leq K,\\
    \end{split}
\end{align}
where $\R$ contains the indexes of the robots in the system, $\mathcal{F}$ denotes the action set associated with the attacked sensors, and $\V_i$ is the available action set for robot $i$.
\end{problem}

The first constraint ensures that robot $i$ only chooses one action from its action set $\mathcal{V}_i$. The ``$\min$'' operator indicates the attacks we consider are the worst-case attacks.
The constraint $|\mathcal{F}|\leq K$ captures the problem assumption that at most $K$ sensors in the team can fail or get attacked.

In this problem, each robot needs to take other robots' actions into consideration while making its decision. That is because neighboring robots' selected actions may have an impact on local robot's action selection. In other words, different action selections result in different performances.

\section{A Consistent Algorithm for Distributed Resilient Submodular Maximization}
\label{sec:algorithm}

\begin{algorithm}[!t]
    \caption{Distributed resilient selection for robot $i$}
    \label{alg: 1}
    
    \textbf{Input: }
    \begin{itemize}
        \item Action set $\V_i$; number of anticipated attacks $K$; 
        \item Communication graph $\G$; objective function $f(\cdot)$.
    \end{itemize}
    
    \textbf{Output:} Set $\S$.
    
    \begin{algorithmic}[1]
        \State $\S^i_1 \leftarrow \emptyset, \S^i_2 \leftarrow \emptyset, \alpha_1^i \leftarrow 0, \alpha_2^i \leftarrow 0$;
        \State $\S^i_1 \leftarrow $\textsc{GenerateRemovals$(\S^i_1, \alpha_1^i)$};
        \State $\S^i_2 \leftarrow $\textsc{GenerateComplements$(\S^i_1, \S^i_2, \alpha_2^i)$};
        \State $\S \leftarrow \S^i_1 \cup \S^i_2$.
    \end{algorithmic}
\end{algorithm}

We present a distributed resilient algorithm (\algref{alg: 1}) for solving Problem 1. 
At a high level, \algref{alg: 1} contains two main procedures \textsc{GenerateRemovals} (\algref{alg: 2}) and \textsc{GenerateComplements} (\algref{alg: 3}). In the following, we present and analyze these procedures from robot $i$'s perspective since other robots will follow the same procedures. In general, robot $i$ will use these two procedures to approximate the following two sets:
\begin{itemize}
    \item $\S^i_1$: the set that \emph{approximates} the optimal worst-case removal set. Since computing the optimal worst-case removal set is intractable, we use $\S^i_1$ as an approximation. We denote by $\A$ the indices of the robots used by $\S^i_1$. This is the phase I.
    \item $\S^i_2$: the set that \emph{approximates} the optimal set that maximizes the objective function using $\V \setminus \V_i, \forall i \in \A$. Again, this is an approximation since computing the optimal set is intractable. This is phase II.
\end{itemize}
These two procedures will be executed sequentially. Robot $i$ will use $\alpha^i_1$ and $\alpha^i_2$ as two counters for different phases to decide whether to stop the corresponding procedure or not.
Upon the stopping of \algref{alg: 1}, both $\S^i_1$ and $\S^i_2$ converge.
The final solution of Problem 1 will then be $\S^i_1 \cup \S^i_2$. 

In each phase, robot $i$ will approximate and update $\S^i_1$ and $\S^i_2$ through the following processes:
\begin{enumerate}
    \item \emph{Initialization}, which is used to make the first approximation.
    \item \emph{Inter-robot communication}, which is used to combine its local approximation with neighbors' approximations.
    \item \emph{Local computation}, which is used to update local approximation.
\end{enumerate}

\subsection{Phase I: Generate Approximated Removals}
\label{ssec: generate removals}

\begin{algorithm}[!t]
    \caption{(Phase I) Generate approximated removal set for each robot $i$}
    \label{alg: 2}
    \begin{algorithmic}[1]
        \Procedure{GenerateRemovals}{$\S^i_1, \alpha_1^i$}
        \While{$\alpha^i_1 < 2d(\G)$}
        \If{$\S^i_1 = \emptyset$} \Comment{\emph{1) Initialization}} \label{lin: 21}
        \State $\S^i_1 \leftarrow \argmax_{v \in \V_i} f(v)$; 
        \State $f(s) \leftarrow \max_{v \in \V_i} f(v)$;
        \EndIf

        \State
        \State $\S^i_1 \leftarrow \S^i_1 \cup \S^j_1, \forall j \in \N_i$; \Comment{\emph{2) Communication}}
        \State $M = \min(K, |\S^i_1|)$; \Comment{\emph{3) Local computation}} \label{lin: min(k,s)}
        \State $\S^i_1 \leftarrow$ top $M$ actions \footnote{Top $M$ actions in action set $\S^i_1 (|\S^i_1| \geq M)$: given the function values $f(s)$ of all actions $s \in \S^i_1$, sort these function values in a descending order, and set the $M$ actions corresponding to the first $M$ function values as the top $M$ actions in $\S^i_1$.} in $\S^i_1$; \label{lin: select top M}

        \State send $(\S^i_1, \{f(s)\}), \forall s \in \S^i_1$ to all $j \in \N_i$;
        \State update $\alpha^i_1$.
        \EndWhile
        \EndProcedure
    \end{algorithmic}
\end{algorithm}

The procedure for generating approximated removals is called \textsc{GenerateRemovals} (\algref{alg: 2}).
This procedure aims to approximate $K$ action removals $\S_1$ $(|\S_1|=K)$ through the below processes.


\emph{1) Initialization:}
In \algref{alg: 2}, robot $i$ first selects an action that contributes the most to the objective function $f$ regardless of other robots' selections. The selected action is $s \in \argmax_{v \in \V_i} f(v)$. Following the constraint $|\S \cap \V_i| = 1, \forall i \in \R$, robot $i$ is only allowed to select one action from its candidate action set $\V_i$ to update its action set $\S^i_1$. Meanwhile, $f(s)$ is also recorded.


\emph{2) Inter-robot communication:}
To update $i$'s local approximation set $\S^i_1$, robot $i$ needs to combine $j$'s approximation $\S^j_1, \forall j \in \N_i$. Since our task in this phase is to approximate $K$ action removals, we can merge $j$'s approximation as $\S^i_1 \leftarrow \S^i_1 \cup \S^j_1$.


\emph{3) Local computation:}
Once receiving neighbor $j$'s action set $\S^j_1$, robot $i$ updates its action set $\S^i_1$ based on $\S^j_1$. We first form a new candidate set $\S^i_1 \leftarrow \S^i_1 \cup \S^j_1$ to update $\S^i_1$. 
Then, we need to select the top $K$ actions for robot $i$. There are two cases:
\begin{itemize}
    \item If $|\S^i_1| \leq K$, there is no need to update $\S^i_1$.
    \item If $|\S^i_1| > K$, robot $i$ selects the top $K$ actions from $\S^i_1$.
\end{itemize}
In \algref{alg: 2} \linsref{lin: min(k,s)}{lin: select top M}, we combine these two cases as a single operation. That is, robot $i$ needs to select top $m := \min(K, |\S^i_1|)$ actions from $\S^i_1$.
Finally, robot $i$ shares $\S^i_1$ and the corresponding action values $f(s), \forall s \in \S^i_1$ with all its neighbors $j \in \N_i$.


\emph{4) Stopping condition:} After one cycle of local computation and inter-robot communication, the local counter $\alpha^i_1$ will be incremented by $1$. Finally, when $\alpha^i_1$ reaches $2d(\G)$, robot $i$ stops and all robots have an agreement on $\S_1$.

\subsection{Phase II: Generate Approximated Complements}
\label{ssec: generate complements}

\begin{algorithm}[!t]
    \caption{(Phase II) Generate complements for robot $i$}
    \label{alg: 3}

    \begin{algorithmic}[1]
        \Procedure{GenerateComplements}{$\S^i_1, \S^i_2, \alpha_2^i$}
        \While{$\alpha^i_2 < 2d(\G)$}
        \If{$\S^i_2 = \emptyset$} \Comment{\emph{1) initialization}}
        \State $s \in \argmax_{v \in \V_i} f_\emptyset(v)$;
        \State $\S^i_2 \leftarrow \{s\}$;
        \EndIf

        \State
        \For{$j \in \N_i$} \Comment{\emph{2) inter communication}}
        \State $\S_2^{i+} \leftarrow \mathsf{sort}(\{\S_2^i, \S_2^j\}, \text{`descend'})$; \label{lin: sort}
        \State $\S_2^{i+} \leftarrow$ remove redundant actions in $\S_2^{i+}$; \label{lin: remove redundant}
        \State $\S_2^{i+} \leftarrow$ remove order changed actions in $\S_2^{i+}$;\label{lin: remove order changed}
        \State $\S_2^i \leftarrow \S_2^{i+}$;
        \EndFor \label{lin: 32}
        
        \State
        \State $\X \leftarrow \emptyset$; \Comment{\emph{3) local computation}} \label{lin: local first}
        \For{$s_n \in \S_2^i, n=1, \ldots, |\S^i_2|$}
        \State $g \leftarrow f_{\{s_1, \ldots, s_{n-1}\}}(s_n)$;
        \If{$f_\X(\X \cup v) \ge g, \forall v \in \V_i$}
        \State $s \in \argmax_{v \in \V_i} f_\X(\X \cup v)$;
        \State $\X \leftarrow \X \cup \{s\}$;
        \State \textbf{break};
        \EndIf
        \State $\X \leftarrow \X \cup \{s_n\}$.
        \EndFor \label{lin: 36}
        \State $\S^i_2 \leftarrow \X$; \label{lin: local last}
        
        \State
        \State send $\S^i_2$ and marginal gains of $s \in \S^i_2$ to $\N_i$;
        \State update $\alpha^i_2$.
        \EndWhile
        \EndProcedure
    \end{algorithmic}
\end{algorithm}

The procedure for generating the approximated complements is \textsc{GenerateComplements} shown in \algref{alg: 3}. This procedure aims to approximate $N-K$ greedy action selections $\S_2$ ($|\S_2| = |\R \setminus \A| = N-K$) for the remaining robots $\mathcal{R}\setminus \mathcal{A}$ through inter-robot communication and local computation with $\A$ denoting the robots that select actions in phase I. Depending on whether robot $i$ is used as removals or not, robot $i$ in phase II will have two different functionalities:
\begin{itemize}
    \item If $i \in \A$, then robot $i$ acts as a conveyor only to merge the approximation $\S^j_2$ from $j, \forall j \in \N_i$ and broadcast the merged/updated $\S^i_2$ to $j, \forall j \in \N_i$. So, robot $i$ only participated in \emph{inter-robot communication}.
    \item If $i \in \R \setminus \A$, robot $i$ also needs to update its approximation $\S^i_2$ using the \emph{local computation} process.
\end{itemize}

In the following, we demonstrate phase II from robot $i$'s perspective assuming $i \in \R \setminus \A$. If $i \in \A$, then the local computation process will be skipped for robot $i$.


\emph{1) Initialization:}
At the first iteration of phase II, $\S^i_2 = \emptyset$ and thus robot $i$ can directly update $\S^i_2$ as the action with the maximum marginal gain based on the empty set. That is, $s \in \argmax_{v \in \V_i} f_{\emptyset}(v)$.
Then, $\S^i_2$ is updated as $\S^i_2 \leftarrow s$.
The corresponding marginal gain $f_{\emptyset}(s)$ is also recorded.
%


\emph{2) Inter-robot communication:}
Let us consider the case where $|\S^i_2| = n$ with $n \leq N - K$ at some point before the algorithm stops.
We first consider $s \in \S^i_2$ in the descending order they are added through the local computation procedure.
For example, if $|\S^i_2| = n$, we can write $\S^i_2$ as $\S^i_2 = \{ s_1, \ldots, s_n\}$,
such that $f_\emptyset(s_1) \ge \ldots \ge f_{\{s_1, \ldots, s_{n-1}\}} (s_n)$.
We also use $\gamma(\cdot)$ to denote the order of action $s \in \S^i_2$ as
\begin{equation*}
    \gamma(s_1) = 1, \ldots, \gamma(s_n) = n.
\end{equation*}

Similarly, we also apply this reordering procedure to $\S_2^j, \forall j \in \N_i$. Thus, there is also a marginal gain and an order associated with the action $s \in \S^j_2$. With the marginal gains and orders ready, we are ready to merge $\S_2^j$ with $\S^i_2$.
For every $\S^j_2$, we augment $\S^i_2$ with $\S_2^j$ and apply an operation as   
\begin{equation*}
    \S_2^{i+} \leftarrow \mathsf{sort}(\{\S_2^i, \S_2^j\}, \text{`descend'}).
\end{equation*}
This operation is read as ``$s \in \{\S^i_2, \S_2^j\}$ are sorted in a descending order based on the associated marginal gains''.

\textbf{Remove redundant actions:} 
The merged set $\S^{i+}_2$ may contain \emph{redundant} actions.
By redundant actions, we refer to the actions $s \in \S^{i+}_2$ having the following \emph{redundant action} properties:
\begin{itemize}
    \item $s$ appears in $\S^i_2$ and $\S_2^j$. e.g., $s = s'$ where $s \in \S^i_2$ and $s' \in \S_2^j$; 
    \item The associated marginal gains are the same.
    \item The orders in $\S^i_2$ and $\S_2^j$ are the same. e.g., $\gamma(s) = \gamma(s')$.
\end{itemize}
We can check these properties for each $v \in \S^{i+}_2$ against all other actions to remove the redundant actions.


\textbf{Remove order changed actions:}
After the above process, we know that there is an order associated with $s, \forall s \in \S^{i+}_2$. Similarly, we also know that $s \in \S^i_2$ and $s \in \S^j_2$ also have their orders. If the order of any action is changed before and after the augmentation, this action and the actions having lower marginal gains than this action's marginal gain will be invalid.
This rule is from the submodularity of $f(\cdot)$. We can then use the following properties to remove order changed actions. Specifically, we need to remove any $s \in \S^{i+}_2$ if $s$ satisfies the following \emph{order changed} properties:
\begin{itemize}
    \item $s$ appears in $\S^{i+}_2$ and $\S_2^i$ (or $\S_2^j$). e.g., $s = s'$ where $s \in \S^{i+}_2$ and $s' \in \S_2^i$ (or $s' \in \S_2^j$).
    \item The orders are not the same. e.g., $\gamma(s) \neq \gamma(s')$.
\end{itemize}

This operation can be illustrated by the following example. If the local approximation $\S^i_2$ is
\begin{equation*}
    \S^i_2 = \{s_1, s_2\} \quad \text{and} \quad f_\emptyset (s_1) \ge f_{s_1} (s_2),
\end{equation*}
Then, the orders of $s_1, s_2 \in \S^i_2$ are as follows $\gamma(s_1) = 1, \gamma(s_2) = 2$.
Also, if a neighbor $j$'s approximation is
\begin{equation*}
    \S_2^j = \{s_2, s_3\} \quad \text{and} \quad f_\emptyset (s_2) \ge f_{s_2} (s_3).
\end{equation*}
Then, the orders of $s_2, s_3 \in \S_2^j$ are $\gamma(s_2) = 1, \gamma(s_3) = 2$.
Now consider the case where the augmented set is $\S^{i+}_2 = \{s_1, s_2, s_3\}$,
and the marginal gains are such that
\begin{equation}
    f_\emptyset (s_1) \ge \underbrace{f_{s_1} (s_2)}_{s_2 \in \S^i_2} = \underbrace{f_\emptyset (s_2)}_{s_2 \in \S^j_2} \ge f_{s_2} (s_3).
    \label{eq: order}
\end{equation}
After applying the above redundancy removal procedure, there may exist a case where
\begin{equation*}
    f_{\{s_1, s_2\}}(v) > f_{\{s_1, s_2\}}(s_3), 
\end{equation*}
where $v \in \V \setminus (\S^i_1 \cup \{s_1, s_2\})$. In this case, $s_3$ is no longer a valid action in $\S^{i+}_2$ as action $v$ has a higher marginal gain. 
To deal with this case, we can use action orders. From the marginal gains relations in \eqref{eq: order}, we have
\begin{equation*}
    \gamma(s_1) = 1, \underbrace{\gamma(s_2)}_{\text{$s_2 \in \S^i_2$}} = 2, \underbrace{\gamma(s_2)}_{\text{$s_2 \in \S^j_2$}} = 3, \gamma(s_3) = 4.
\end{equation*}
Also, we know that the original orders of $s_2, s_3 \in \S^j_2$ are
\begin{equation*}
    \gamma(s_2) = 1, \gamma(v_3) = 2, \quad \text{where} \quad s_2, s_3 \in \S^j_2.
\end{equation*}
So, the order of $s_2$ and $s_3$ where $s_2, s_3 \in \S^j_2$ are changed after merging. Therefore, we need to remove these two actions from $\S^{i+}_2$. Finally, the augmented approximation is assigned to $\S^i_2$ as $\S^i_2 \leftarrow \S^{i+}_2$.


\emph{3) Local computation:}
After the inter-robot communication process, robot $i$ may need to change its original action selection. That is because robot $i$ made its selection before knowing its neighbors' approximations $(\S^j_2, \forall j \in \N_i)$. Once receiving $\S^j_2, \forall j \in \N_i$, robot $i$ can update its own action selection, i.e., $v \in \V_i$.

We update robot $i$'s action selection based on the marginal gain of $v \in \V_i$ for every possible combination of its neighbors' selections. The necessity of this operation is from the observation that the marginal gain of an action will be changed if the already selected action set is changed.
For example, after the inter-robot communication, if we have $\S^i_2 = \{s_1, \ldots, s_n\}$ and $v \notin \S^i_2, \forall v \in V_i$, 
we then need to check the marginal gain of $v \in \V_i$ when $v$ has different orders in $\S^i_2$. Since we already know the associated marginal gains of $s \in \S^i_2$, we can compare
\begin{equation*}
\begin{split}
    \max_{v \in \V_i} f_{\emptyset}(v) \quad \text{v} & \text{s.} \quad  f_{\emptyset}(s_1),\\
    \quad & \vdots \\
    \max_{v \in \V_i} f_{\{s_1, \ldots, s_{n-1}\}}(v) \quad  \text{v} & \text{s.} \quad f_{\{s_1, \ldots, s_{n-1}\}}(s_n).
    \end{split}
\end{equation*}
Whenever $\forall v \in \V_i$ generates a better marginal gain than the compared one, we replace the compared action with the action $v$ and delete the actions selected after the compared one. This is because if the orders of the actions are changed, then the marginal gains are invalid. This operation is shown in \linsref{lin: local first}{lin: local last} (\algref{alg: 3}). 
Meanwhile, the associated marginal gain is also updated. Finally, the updated $\S^i_2$ along with the marginal gains of $s \in \S^i_2$ are broadcasted to $j \in \N_i$.


\emph{4) Stopping condition:}
After one cycle of local computation and inter-robot communication, the local counter $\alpha^i_2$ will be incremented by $1$ if there is no change of $\S^i_2$ before and after these two processes. Otherwise, the counter $\alpha^i_2$ is reset to $0$. When $\alpha^i_2$ reaches $2d(\G)$, where $d(\G)$ is the diameter of $\G$, robot $i$ stops operations. Meanwhile, all robots have an agreement on the approximation of $\S_2$.

\section{Performance Analysis}
\label{sec:performance}

\begin{lemma}
    The procedure \textsc{GenerateRemovals} (\algref{alg: 2}) for finding the approximated removals has the following performance:
    \begin{enumerate}
        \item \emph{Approximation performance:} The approximated removals for robot $i$ is $\S^i_1 = \S_1$, where $\S_1$ is the $K$-max consensus result.
        \item \emph{Convergence time:} The algorithm takes $d(\G)$ steps to converge, where $d(\G)$ is the diameter of $\G$.
        \item \emph{Computational complexity:} The computational complexity for every robot is at most $\O(|\V_i|)$.
    \end{enumerate}
    \label{lem: 1}
\end{lemma}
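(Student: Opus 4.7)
The plan is to address each of the three claims separately, with the unifying observation that \textsc{GenerateRemovals} implements a distributed top-$K$ consensus on the per-robot contributions $v_i^\star \triangleq \argmax_{v \in \V_i} f(v)$, since each robot's local candidate is determined once at initialization and never re-computed.

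For the approximation claim, I would prove by induction on the round index $t$ the invariant that after $t$ cycles of communication, robot $i$'s set $\S^i_1$ equals the top $\min(K, |B_t(i)|)$ elements, ranked by $f$-value, of $\{v_j^\star : j \in B_t(i)\}$, where $B_t(i)$ denotes the $t$-hop ball around $i$ in $\G$. The base case $t = 0$ follows directly from the initialization line, where $\S^i_1$ is assigned to $\{v_i^\star\}$. For the inductive step, note that $B_t(i) = \bigcup_{j \in \N_i \cup \{i\}} B_{t-1}(j)$, so the merged union $\bigcup_{j \in \N_i \cup \{i\}} \S^j_1$ contains every $v_k^\star$ with $k \in B_t(i)$ that could belong to the top-$K$ over $B_t(i)$: if such a $v_k^\star$ had been dropped from some $\S^j_1$ at round $t-1$, then by the inductive hypothesis it would be strictly dominated there by $K$ actions drawn from $B_{t-1}(j) \subseteq B_t(i)$, contradicting its alleged top-$K$ status on the larger set. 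Truncating the merged union to its top $M = \min(K, |\S^i_1|)$ entries therefore recovers the intended top-$K$ over $B_t(i)$. Taking $t = d(\G)$ gives $B_t(i) = \R$ for every $i$, yielding $\S^i_1 = \S_1$.

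For the convergence time, the same induction also shows that $d(\G)$ rounds suffice, since at that stage every robot's $d(\G)$-hop ball already coincides with the entire team. The factor-of-two margin used in the stopping condition $\alpha^i_1 < 2d(\G)$ is only a conservative local-detection mechanism and does not alter the underlying number of rounds needed for convergence. For the computational complexity, the dominant operation is the one-time scan $\argmax_{v \in \V_i} f(v)$ in the initialization step, which costs $\O(|\V_i|)$. Every subsequent round operates on a merged set of size at most $(|\N_i| + 1) K$, so selecting the top-$M$ actions costs $\O(|\N_i| K \log K)$ per round and $\O(d(\G)\, |\N_i|\, K \log K)$ in total, which is absorbed into $\O(|\V_i|)$ under the standing assumption that the local action set is the largest parameter.

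The main obstacle, I expect, is stating the inductive invariant for the first claim carefully enough to handle the interaction between the hard cap $K$ on $|\S^i_1|$ and the monotonic growth of the effective neighborhood $B_t(i)$ as $t$ increases. The non-trivial content is verifying that truncation commutes with merging — i.e., that an action dropped in an early round can never re-qualify as belonging to the eventual top-$K$ — which rests on the observation that being dominated by $K$ retained actions in a subset $B_{t-1}(j)$ persists as a form of domination in the superset $B_t(i)$ throughout the procedure.
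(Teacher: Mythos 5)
Your proposal is correct and follows essentially the same route as the paper's proof: a distributed top-$K$ max-consensus argument that propagates each robot's locally best action $\argmax_{v\in\V_i} f(v)$ over the graph and converges after $d(\G)$ rounds, with your explicit induction on the $t$-hop ball $B_t(i)$ (and the check that truncation to the top $K$ commutes with merging) being a more rigorous rendering of what the paper asserts informally. The only minor divergence is in the complexity claim: the paper counts only oracle evaluations of $f$ (exactly $|\V_i|$, all at initialization), whereas you fold the per-round sorting cost into $\O(|\V_i|)$ via an extra dominance assumption the paper does not need.
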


\begin{proof}
    1). \emph{Approximation performance:}
    In the centralized scenario, we know that we need to find the top $K$ actions to approximate the removal set. In the distributed scenario, $\S^i_1$ is updated as $\S^i_1 \leftarrow \argmax_{v \in \V_1(i)} f(v)$ at the beginning.
    Assume that $i$ and $j$ are different before communicating with each other. Upon receiving $\S^j_1$, robot $i$'s approximation $\S^i_1$ is updated by using $\min(K, |\S^i_1 \cup \S^j_1|)$ actions as shown in \linref{lin: min(k,s)} (\algref{alg: 2}).
    Similarly, this procedure is also applied to $j$.
    Thus, robot $i$ and $j$ will agree with each other on the top $K$ actions after communication.
    Finally, when all robots $r \in \R$ receive other robots' approximation after $d(\G)$ steps, they achieve a consensus on the top $K$ actions.

    2) \emph{Convergence time:}
    In every execution of \algref{alg: 2}, $i$ needs to update $\S^i_1$ through the received $\S^j_1$ from $j \in \N_i$. Similarly, it takes $d(\G)$ steps for $i$ to receive $\S^r_1$ from $r$ that has the longest communication distance. During this procedure, every robot can receive all other robots' approximation at least once.
    Thus, \algref{alg: 2} takes $d(\G)$ steps to converge.

    3) \emph{Computational complexity:}
    Robot $i$ needs $|\V_i|$ evaluations to find the largest contribution $s \in \argmax_{v \in \V_i} f(v)$. After communications, if $s \in \S^i_1$, then $s$ is among the top $K$ actions. If $s \notin \S^i_1$, then $s$ is replaced by other actions with larger contributions and we do not need to evaluate for robot $i$ again. So, the number of evaluations for every robot is at most $\O(|\V_i|)$.
\end{proof}

\begin{lemma}
    The procedure \textsc{GenerateComplements} (\algref{alg: 3}) for finding the complements has the following performance:
    \begin{enumerate}
        \item \emph{Approximation performance:} The approximated complements for $i$ is $\S^i_2 = \S_2$, where $\S_2$ is the centralized greedy solution generated by using marginal gains.
        \item \emph{Convergence time:} In the worst-case, the algorithm converges in $2(N-K+1)d(\G)$ steps, where $d(\G)$ is the diameter of $\G$.
        \item \emph{Computational complexity:} The computational complexity for every robot is at most $\O((N-K)^2 |\V_i|)$.
    \end{enumerate}
    \label{lem: 2}
\end{lemma}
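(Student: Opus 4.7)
The plan is to prove the three claims in order, mirroring the structure of the proof of Lemma~\ref{lem: 1}. For the approximation performance, I would argue by induction on the rank $n$ of actions in the centralized greedy ordering $f_\emptyset(s_1) \ge f_{s_1}(s_2) \ge \cdots \ge f_{\{s_1,\dots,s_{N-K-1}\}}(s_{N-K})$ that, upon stopping, the first $n$ entries of every $\S^i_2$ coincide with the first $n$ picks of the centralized greedy run on $\V \setminus \bigcup_{j\in\A}\V_j$. The base case follows from initialization: each robot $i \in \R \setminus \A$ proposes $\argmax_{v\in\V_i} f_\emptyset(v)$, and the sort-then-merge step together with $d(\G)$ rounds of gossip pushes the global maximizer to the head of every $\S^i_2$. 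For the inductive step, once the first $n-1$ slots are stable, the local-computation loop (\linsref{lin: local first}{lin: local last}) compares robot $i$'s best candidate under the fixed prefix against the broadcast $s_n$, and the redundancy and order-change rules (\linsref{lin: remove redundant}{lin: remove order changed}) delete any action whose marginal gain is dominated by a newly surfaced candidate, forcing the $n$-th slot to settle on the true greedy element.

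For the convergence time, I would observe that each newly locked-in greedy element requires at most $d(\G)$ steps for its marginal gain to traverse the graph and a further $d(\G)$ steps for the counter $\alpha^i_2$, which is reset whenever $\S^i_2$ changes, to certify that no further change has occurred. Summing this per-element cost over the $N-K$ greedy selections and adding a final $2d(\G)$ rounds of global confirmation yields the stated bound of $2(N-K+1)d(\G)$. For the computational complexity, in each inter-communication/local-computation cycle robot $i$ evaluates $\max_{v\in\V_i} f_{\{s_1,\dots,s_{m-1}\}}(v)$ for up to $|\S^i_2|\le N-K$ possible prefix positions, costing $\O((N-K)|\V_i|)$ marginal-gain evaluations per cycle. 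Aggregating over the $\O(N-K)$ cycles required for all greedy elements to lock in gives the advertised $\O((N-K)^2|\V_i|)$.

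The main obstacle is the inductive step of the approximation argument. Neighbors may hold approximations with incompatible prefixes, so I must show that the combination of sort-by-descending-marginal-gain, redundant-action removal, and order-change removal is simultaneously \emph{sound} (no valid greedy prefix is ever discarded) and \emph{complete} (every non-greedy action is eventually rejected). The delicate point is verifying that the order-change rule, which exploits submodularity to invalidate stale marginal gains, correctly distinguishes an action whose gain has been genuinely superseded by a newly inserted predecessor from one whose rank merely shifted because of a benign merge. Establishing this invariant cleanly---most naturally by coupling the evolution of $\S^i_2$ to a lexicographic potential on the sequence of recorded marginal gains and showing it is strictly monotone until the greedy sequence is reached---would be the crux of the proof.
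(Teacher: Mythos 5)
Your proposal follows essentially the same route as the paper's proof: prefix-by-prefix agreement on the centralized greedy ordering enforced by the sort/redundancy/order-change rules, a per-element cost of $2d(\G)$ plus a final $2d(\G)$ confirmation yielding the $2(N-K+1)d(\G)$ bound, and per-cycle marginal-gain counting (at most $(N-K)|\V_i|$ evaluations over at most $N-K$ cycles) for the $\O((N-K)^2|\V_i|)$ complexity. The invariant you single out as the crux---that the order-change rule is both sound and complete---is precisely the step the paper's own proof also leaves at the level of assertion ("every robot will have an agreement on at least one action after each communication"), so your plan is, if anything, more explicit about where the real work lies.
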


\begin{proof}
    1). \emph{Approximation performance:}
    When merging $\S_2^j$ with $\S^i_2$, we first use the $\mathsf{sort}(\cdot)$ procedure to maintain the orders of the actions $s \in \S^{i+}_2$ regardless of the redundancy and the orders of the actions in $\S^{i+}_2$ as shown in \linref{lin: sort} of \algref{alg: 3}.
    Then, through the operation described in \secref{ssec: generate complements} (also in \linsref{lin: remove redundant}{lin: remove order changed} of \algref{alg: 3}), we resolve these two issues by removing any $s \in \S^{i+}_2$ that is either redundant or order changed. In local computation, when robot $i$ updates its action set, the marginal gains of $s \in \S^i_2 \setminus \V_i$ are used as oracles. In the local computation procedure, when any $v \in \V_i$ replaces the compared action, the actions having lower marginal gains in $\S^i_2$ are removed from $\S^i_2$.
    This procedure maintains the descending orders of $s \in \S^i_2$ while updating robot $i$'s contribution. Therefore, all these procedures help to keep the descending order of $s \in \S^i_2$.
    When these procedures are applied to all robots in $\R$, the system converges to the same approximation $\S^i_2$ since every robot will have an agreement on at least one action after each communication. Also, since the descending orders of $v \in \S^i_2$ are kept during all communications, the final converged $\S^i_2$ is the same as the centralized solution. That is, $\S^i_2 = \S_2$.

    2) \emph{Convergence time:}
    Through the above analysis, we know that if $\S^i_2 \neq \S_2^j$, then this disagreement is resolved through communications. In an extreme case, we assume that the communication distance between robot $i$ and robot $r$ is $d(\G)$.
    It then takes $2d(\G)$ steps for $i$ to agree with $r$ on at least one action that is selected by $i$ or $r$. Also, $\max_{i \in \R} |\S^i_2| = N-K$. Therefore, it takes at most $2(N-K) d(\G)$ steps to reach the final agreement.
    Meanwhile, robot $i$ needs to take another $2d(\G)$ steps to confirm the convergence.

    3) \emph{Computational complexity:}
    \algref{alg: 3} needs at most $|\V_i| |\S^i_2 \cup \S_2^j|$ evaluations during each local computation procedure since $i$ checks its maximum contribution against every combination of the actions in the merged set $\S^i_2 \cup \S_2^j$. Also, it holds that $\max_{i,j \in \R} |\S^i_2 \cup \S_2^j| = N-K$.
    Therefore, the computational complexity for every robot is at most $\O((N-K)^2 |\V_i|)$.
\end{proof}

\begin{theorem}
    \algref{alg: 1} has the following performance:
    \begin{enumerate}
        \item \emph{Performance:} The approximation ratio is
              \begin{equation*}
                  f(\S \setminus \F^\star) \ge \max \{\frac{1-c_f}{1 + c_f}, \frac{1}{1+K}, \frac{1}{|\mathcal{R}|-K}\} f(\S^\star \setminus \F^\star).
              \end{equation*}
              where $\S^\star$ is an optimal solution and $\F^\star$ is an optimal removal set with respect to $\S^\star$.
        \item \emph{Convergence time:} In the worst-case, the algorithm converges in $(2N-2K+3)d(\G)$ steps, where $d(\G)$ is the diameter of $\G$.
        \item \emph{Computational complexity:} The computational complexity for every robot is at most $\O((N-K)^2 |\V_i|)$.
    \end{enumerate}
\end{theorem}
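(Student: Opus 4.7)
The plan is to bootstrap off the preceding two lemmas. By \cref{lem: 1} and \cref{lem: 2}, every robot's output $\S^i_1 \cup \S^i_2$ converges to the same pair $\S_1 \cup \S_2$, where $\S_1$ is the global top-$K$ singleton-value set and $\S_2$ is the centralized greedy solution over $\R\setminus\A$. Hence the set produced by \algref{alg: 1} is identical to the output of the centralized two-phase resilient greedy of \cite{tzoumas2017resilient}, and each part of the theorem reduces to a short additive argument on top of that equivalence.

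For the approximation ratio I would invoke the centralized resilient submodular analysis (together with its curvature refinement) applied to $\S = \S_1 \cup \S_2$. The three terms in the maximum correspond to three complementary ways of lower bounding $f(\S\setminus\F^\star)$. First, since $|\S_1|=K \ge |\F^\star\cap\S_1|$, at least one of the top-$K$ singletons survives in $\S_1\setminus\F^\star$, and comparison with the best singleton in $\S^\star$ yields the $1/(1+K)$ factor. Second, since Phase II appends $|\R|-K$ greedy actions and $|\F^\star|\le K$, at least one Phase-II survivor remains, and the standard greedy guarantee against $\S^\star\setminus\F^\star$ yields the $1/(|\R|-K)$ factor. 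Third, combining the curvature-based modular lower bound on $f(\S_1)$ with the greedy submodular guarantee on the marginal gain of $\S_2$ over $\S_1\setminus\F^\star$ gives the $(1-c_f)/(1+c_f)$ factor. Splitting $\F^\star$ into $\F^\star\cap\S_1$ and $\F^\star\cap\S_2$ lets the three bounds be applied separately, and the maximum picks the tightest depending on how the adversary distributes its removal budget.

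For the convergence time I simply add the two phases executed sequentially in \algref{alg: 1}: \cref{lem: 1} contributes $d(\G)$ rounds for Phase I to reach consensus on $\S_1$, and \cref{lem: 2} contributes $2(N-K+1)d(\G)$ rounds for Phase II (including the $2d(\G)$-round stall counter used to detect convergence on $\S_2$). Summing gives $d(\G) + 2(N-K+1)d(\G) = (2N-2K+3)d(\G)$. For the per-robot computational complexity, Phase I contributes $\O(|\V_i|)$ and Phase II contributes $\O((N-K)^2|\V_i|)$; the latter dominates, giving the stated $\O((N-K)^2|\V_i|)$.

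The main obstacle is the $(1-c_f)/(1+c_f)$ curvature bound: because $\S_1$ is formed by singleton value rather than by marginal contribution relative to $\S_2$, the curvature argument has to carefully track how much adding $\S_2$ shrinks the contribution of the surviving portion of $\S_1$, and conversely how $f(\S^\star\setminus\F^\star)$ is dominated by the modular overestimate on the union. The other two bounds are comparatively routine counting arguments given the structural equivalence handed to us by \cref{lem: 1} and \cref{lem: 2}.
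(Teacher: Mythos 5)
Your proposal takes essentially the same route as the paper: both reduce the theorem to the centralized algorithm via Lemmas~\ref{lem: 1} and~\ref{lem: 2} (so the approximation ratio is inherited from the centralized resilient analysis of \cite{tzoumas2017resilient}), and both obtain the convergence time $d(\G)+2(N-K+1)d(\G)=(2N-2K+3)d(\G)$ and the complexity $\O((N-K)^2|\V_i|)$ by summing the two phases. The paper simply cites the centralized bound rather than re-deriving the three terms, so your sketch of that derivation is extra work the paper does not attempt; be aware only that your claim that at least one element of $\S_1$ survives $\F^\star$ fails when $\F^\star=\S_1$, though this does not affect the reduction the paper actually relies on.
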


\begin{proof}
    1). \emph{Approximation performance:}
    From \lemref{lem: 1} and \lemref{lem: 2}, we know that $\S^i_1 = \S_1$ and $\S^i_2 = \S_2$, where $\S_1$ and $\S_2$ are the corresponding centralized solutions. Then, the approximation performance of the distributed resilient algorithm (Algorithm~\ref{alg: 1}) is the same as that of its centralized counterpart \cite{tzoumas2017resilient}. That is,
    \begin{equation*}
                  f(\S \setminus \F^\star) \ge \max \left\{\frac{1-c_f}{1 + c_f}, \frac{1}{1+K}, \frac{1}{|\mathcal{R}|-K} \right\} f(\S^\star \setminus \F^\star).
    \end{equation*}
    where $\S^\star$ is an optimal solution and $\F^\star$ is an optimal removal set with respect to $\S^\star$.

    2) \emph{Convergence time:}
    Based on the results from \lemref{lem: 1} and \lemref{lem: 2}, we know that the convergence time is $(2N-2K+3)d(\G)$.

    3) \emph{Computational complexity:}
    Combining the results from \lemref{lem: 1} and \lemref{lem: 2}, we have the computational complexity as $\O((N-K)^2 |\V_i|)$.
\end{proof}

\section{Numerical Evaluation}
\label{sec:simulation}

\subsection{Simulation Setup}

\emph{Environment settings:} We verify the performance of \algref{alg: 1} by implementing it into a scenario where a distributed multi-robot system explores an environment modeled by a Gaussian mixture model (GMM).
Specifically, the environment is generated as $z(x,y) = \sum_{\ell = 1}^B r_\ell b_\ell(x,y) = \vec r^\trs \vec b$, where $(x,y)$ are 2D coordinates, $r_\ell: \RR \mapsto \RR$ are the weights for the basis functions $b_\ell: \RR^2 \mapsto \RR, \forall \ell = 1, \ldots, B$. Also, $\vec r = [r_1, \ldots, r_\ell]^\trs$ and $\vec b = [b_1, \ldots, b_\ell]^\trs$ are the stacked weights and basis functions respectively. The number of basis function and variances are selected randomly. In the simulation, we use a $200 \times 200$ field to represent the environment. There is an environmental importance associated with each location. The importance value of a location equals to the GMM value of that location.

\emph{Compared algorithms:}
We compare the performance of \algref{alg: 1}, which is referred to as ``\emph{distributed-resilient}", with the performance of the following methods:
\begin{itemize}
    \item An \emph{optimal} method, where the solution is generated through a brute-force search.
    \item A \emph{semi-dist} method \cite{zhou2020distributed}, where the solution is generated by first partitioning robots into groups and then running a centralized resilient algorithm in each group.
    \item A \emph{cent-greedy} method \cite{nemhauser1978analysis}, where the solution is generated greedily based on marginal gains that maximize the objective function in a centralized manner.
    \item A \emph{cent-rand} method, where the solution is generated randomly in a centralized manner.
\end{itemize}

\emph{Multi-robot system settings:} 
We compare the performance of the system using two different settings:
1), In the first setting, we compare the performance of our distributed resilient method with the optimal, the semi-dist, the cent-greedy method, and the cent-rand methods. We set the number of robots as $N =5$, and the maximum number of attacks as $K = 3$. Then, we run $200$ trials to compare the performance. We generate random initial locations for the robots in each trial, and each robot has four actions (forward, backward, left, and right). 2), In the second setting, we compare the performance of the resilient method, the semi-dist, the cent-greedy method, and the cent-rand method. Specifically, we set the number of robots as $N \in \{30, 40, 50\}$. The corresponding number of attacks $K$ is randomly generated from $[0.5N, 0.75N]$ and rounded to an integer. This setting means that at least $50\%$ of the robots will be attacked, and at most $75\%$ of the robots will be attacked. The robots' rewards are added with white Gaussian noise with a mean of $10\%$ of the original rewards and a variance of $5\%$ of the original rewards. We then run $50$ trials for each setting. However, since finding the worst-cast attacks is intractable and we aim to test the proposed algorithm's scalability, we use greedy attacks in this case. That is, the attacker attacks the robots' sensors greedily using the standard greedy algorithm~\cite{nemhauser1978analysis}. Common simulation parameters include: in each of the above settings, the sensing range of the robots is set to $10$; the reward of an action is the environmental importance explored by this action. In each trial, the robots are randomly placed in a region with $x \in [50, 100], y \in [50, 100]$. Finally, we perform Monte Carlo simulations to test the performance of these four methods with the same simulation parameters.

\subsection{Performance Comparison}

\emph{Performance metric:}
The performance of different methods is captured by the sum of the importance in the explored area after worst-case attacks. Specifically, we first generate a solution for each method. Then, attackers attack and remove the contributions of attacked robots. Since finding worst-case attacks is intractable, we use a brute-force search to find the worst-case attacks for each generated solution.

\begin{figure}[!tbp]
    \centering
    \includegraphics[width=2.9in]{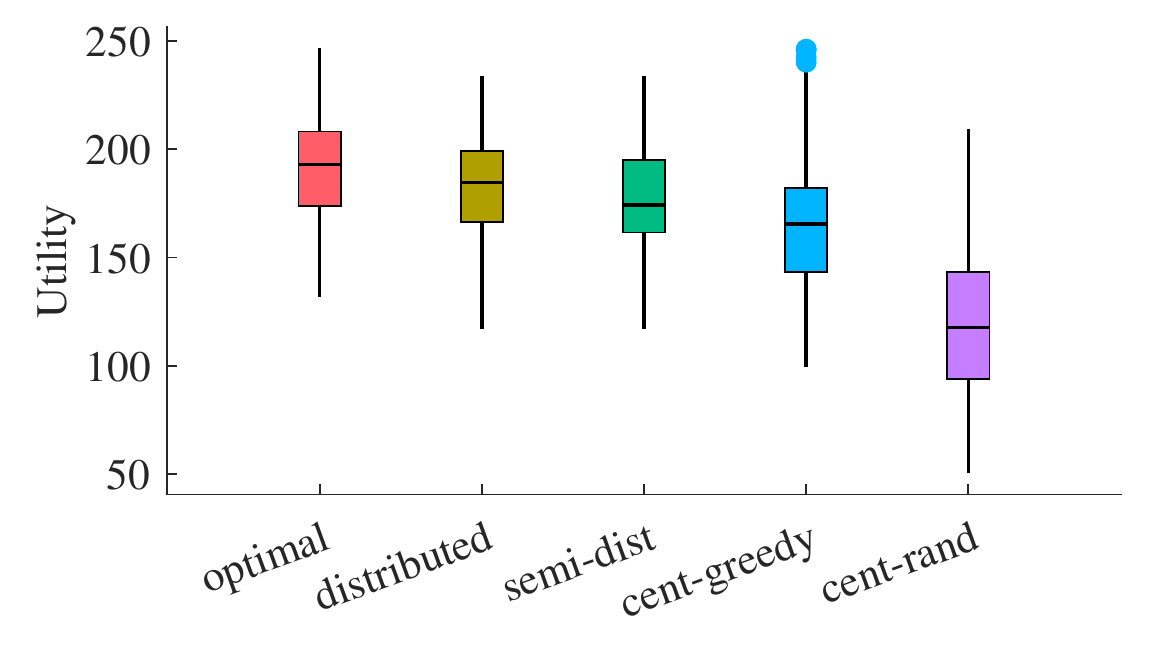}
    \caption{The statistics of the utilities of the five different methods over $200$ trials with the number of robots $N=5$ and the number of attacks $K=3$. The box-plot demonstrates the quartiles of different solutions.}
    \label{fig: box-plot}
\end{figure}

\begin{figure}[!tbp]
    \centering
    \includegraphics[width=3in]{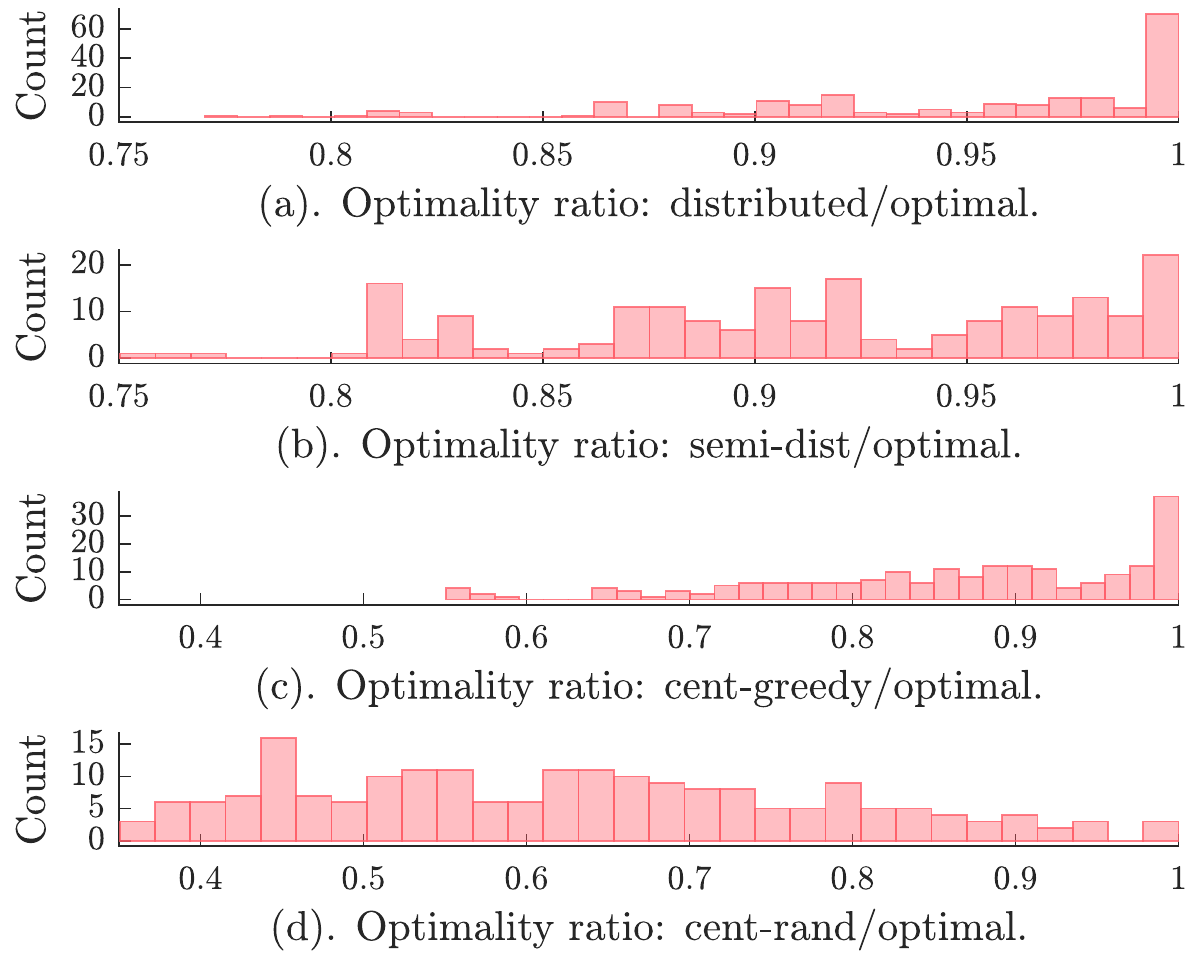}
    \caption{The optimality ratios of different solutions with respect to their corresponding optimal solutions in each of the $200$ trials.}
    \label{fig: hists}
\end{figure}
    In the first setting, we compare the statistics of the utilities of different methods using $200$ trials, as shown in \figref{fig: box-plot}.
    The utilities in the box-plot reflect the performance of different methods by using the quartiles of each solution. As suggested in the result, we observe that the median of the utilities generated by the proposed distributed resilient method shows better performance than that of the other three methods except for the optimal method.
\begin{figure}[!tbp]
\centering
\subfigure[]{
\label{fig: env1}
\includegraphics[width = 1.58in]{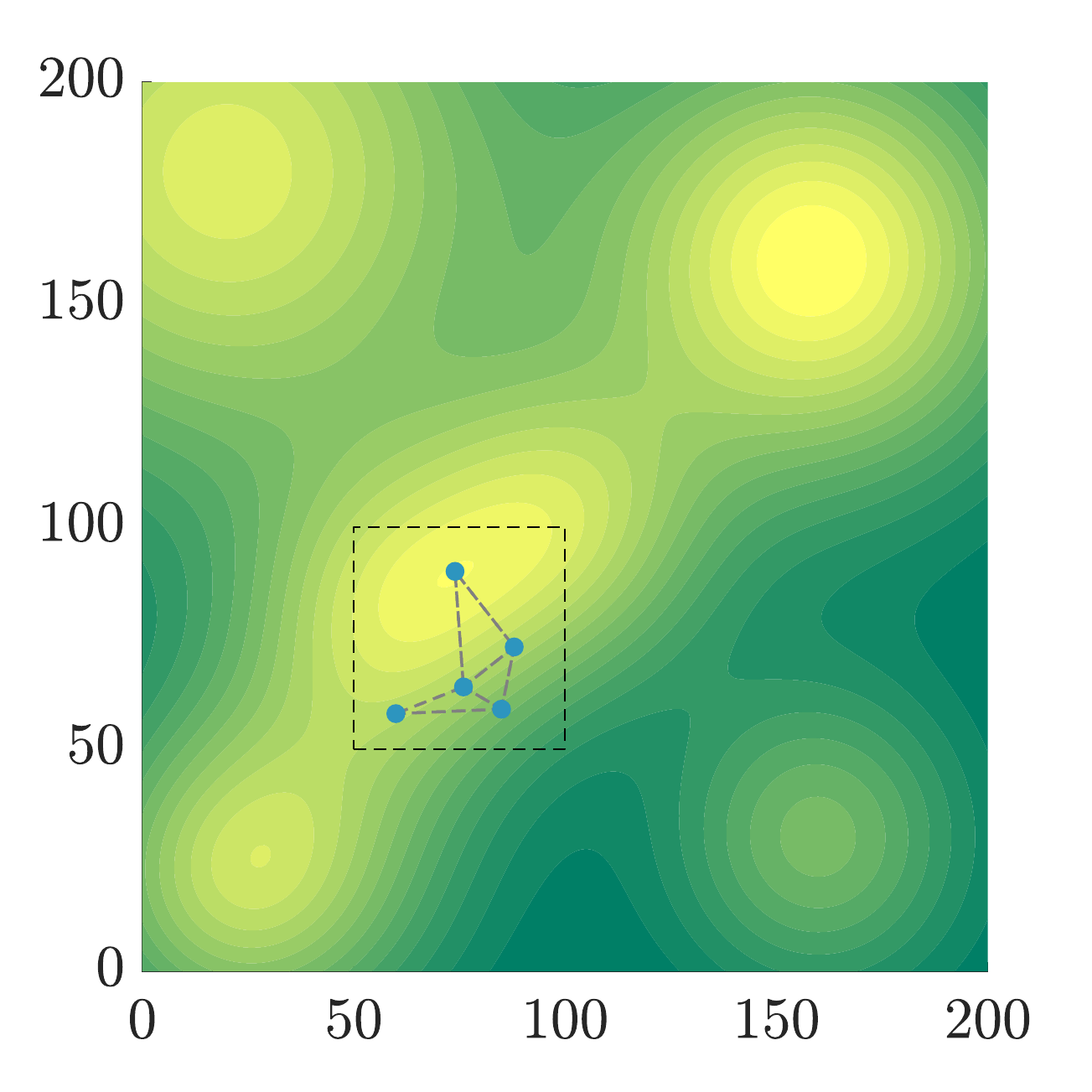}}
\subfigure[]{
\label{fig: env2}
\includegraphics[width = 1.58in]{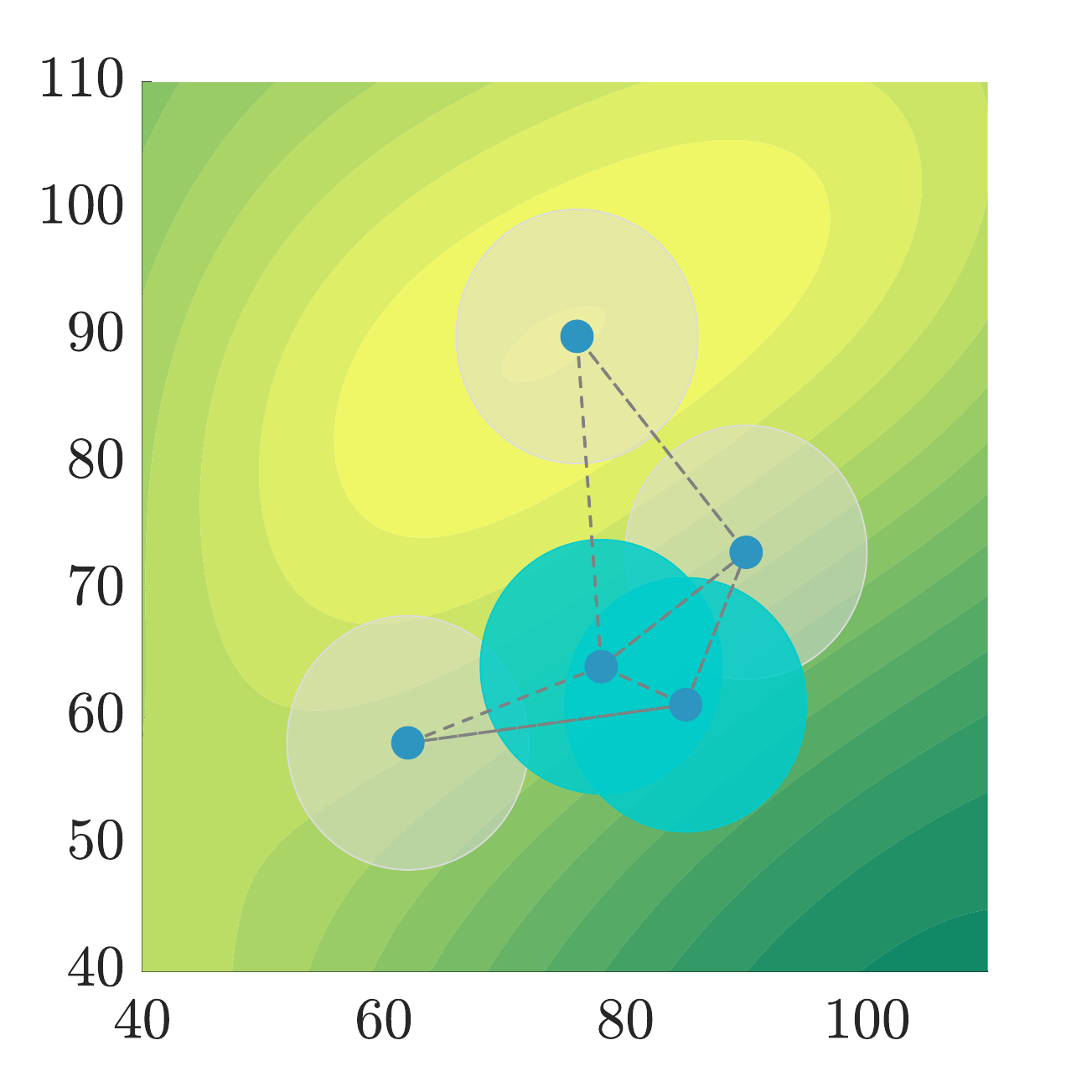}}
\caption{(a). The robots ($N = 5$) are placed randomly in the environment, and a connected communication graph $\G$ is initialized randomly. (b). The resilient solution after the worst-case attacks ($K =3$). The selections of worst-case attacked sensors are in gray. The selections of unattacked sensors are in cyan.}
\label{fig: env}
\end{figure}

In \figref{fig: hists}, we compare the optimality ratios of different solutions with respect to their corresponding optimal solutions in each setting. Specifically, the optimality ratio range of the proposed distributed-resilient is $[0.77, 1]$. The optimality ratio range of the semi-distri-resilient method is $[0.75, 1]$. The optimality ratio range of the centralized-greedy method is $[0.55, 1]$ The optimality ratio range of the centralized-random method is $[0.35, 1]$. This further illustrates that the proposed algorithm (\algref{alg: 1}) is superior to the other methods since most of the cases have a close to optimal optimality ratio as shown in \figref{fig: hists}(a). In \figref{fig: env1}, we demonstrate the environment and the initial configuration of the robots of one instance. Then, the solution of the proposed distributed-resilient method using one instance is shown in \figref{fig: env2}.

\begin{figure}[!tbp]
    \centering
    \includegraphics[width=3.2in]{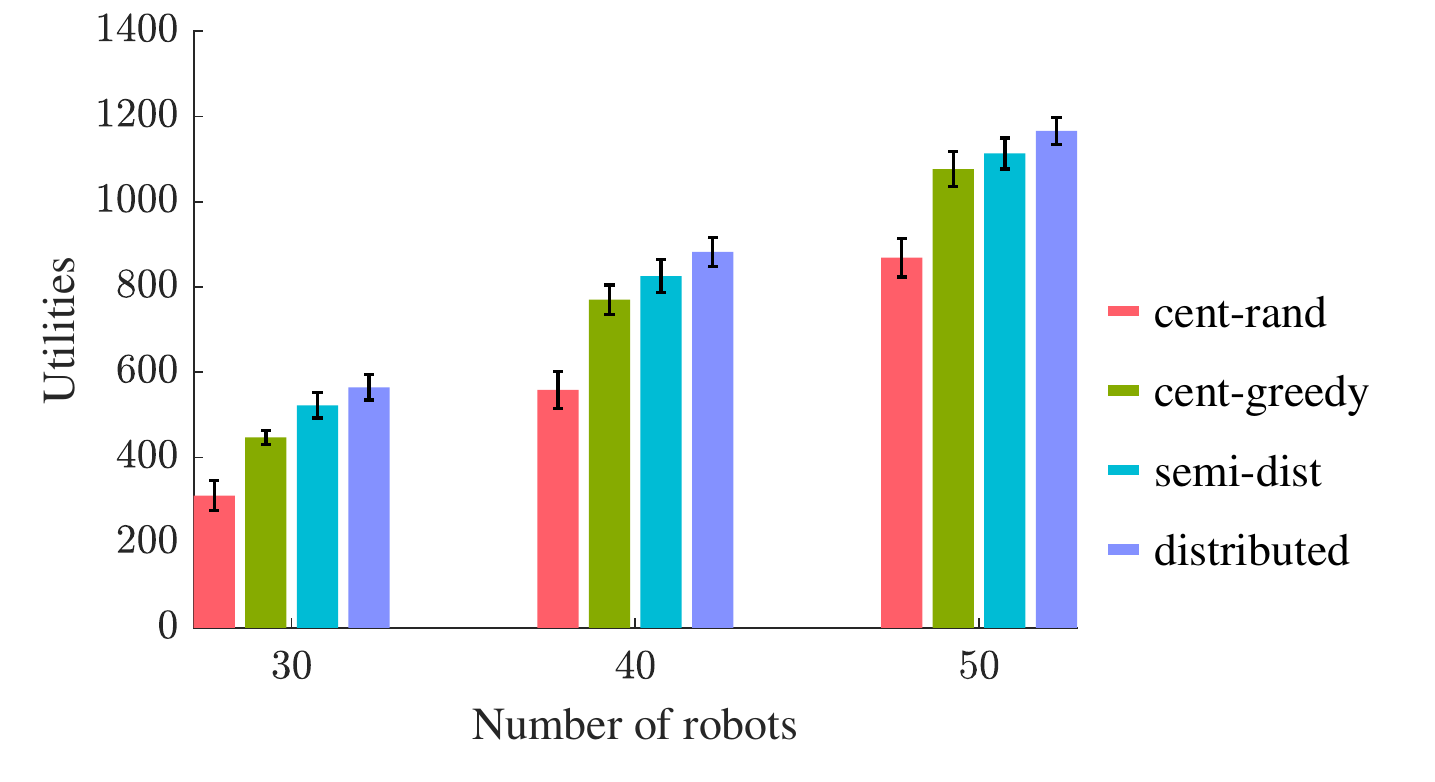}
    \caption{The utilities of the four different methods with $N =30, 40$, and $50$ and with the number of attacks $K$ (for each $N$) randomly generated from $[0.5N, 0.75N]$.}
    \label{fig: 4}
\end{figure}
 
\begin{figure}[!tbp]
    \centering
    \includegraphics[width=3.3in]{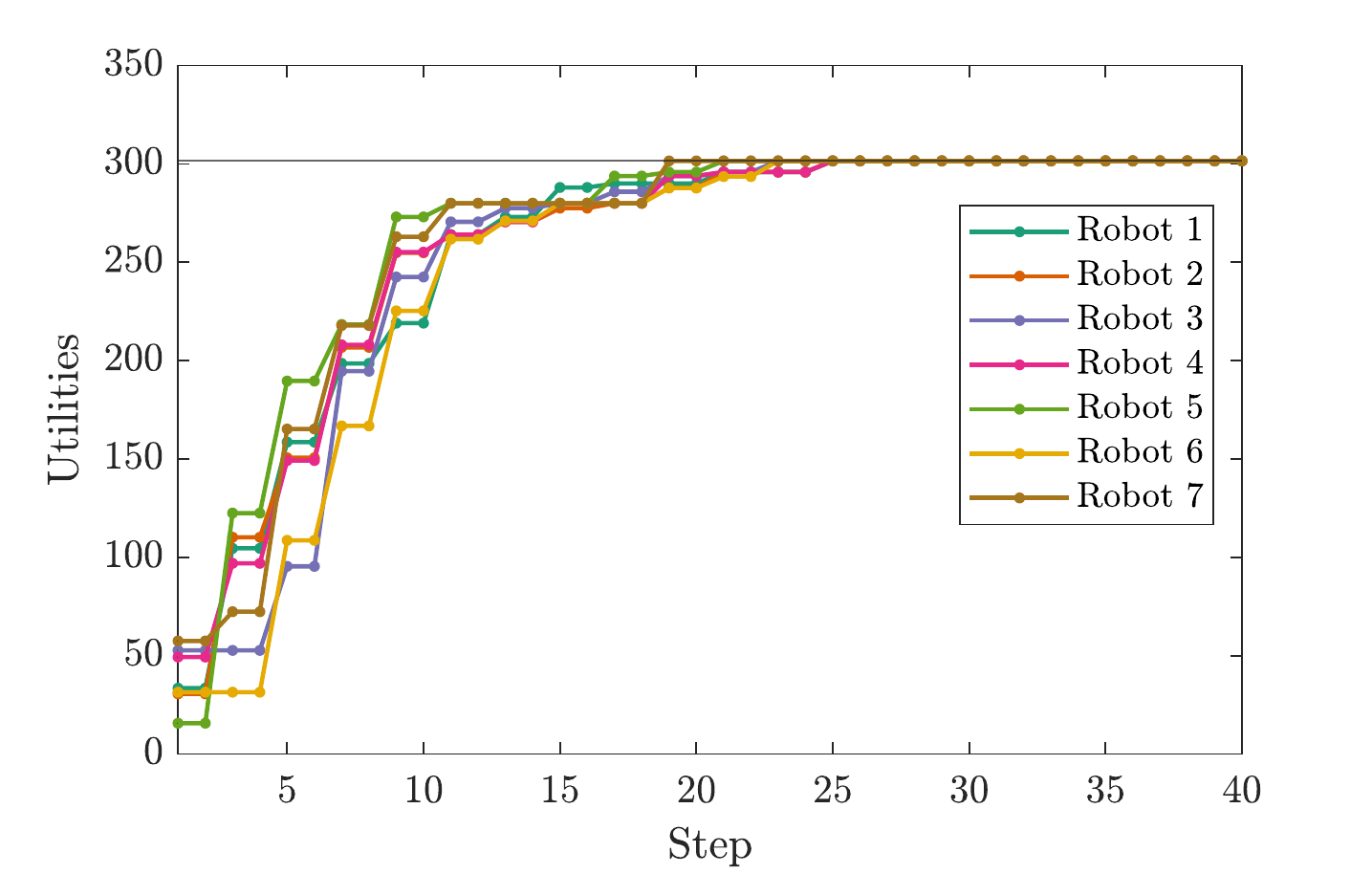}
    \caption{The evolution of the utilities of the 7 robots that are not attacked when the number of robot $N = 15$ robots and the number of attacks $K = 8$.}
    \label{fig: 5}
\end{figure}

In the second setting, we compare the mean of the utilities of different methods.
\figref{fig: 4} shows the utilities of four different approaches. The result demonstrates that \algref{alg: 1} yields superior results compared with the other methods.
Finally, we plot in \figref{fig: 5} the evolution of utilities for different robots when the number of robots is $15$, and the number of attacks is $8$, demonstrating the convergence of the proposed distributed resilient method.

\begin{remark}
     Under our problem formulation, imperfect motion and sensing impact our algorithm in two ways: they disturb how the robots evaluate the reward of their actions and the ability to execute high-reward actions as planned. These influences do not change the fundamental nature of the problem but would instead impact the collected reward. However, the proposed algorithm is still superior to the other three methods, as shown in \figref{fig: 4}. 
\end{remark}

\section{Conclusions and Future Work}
\label{sec: conclusion}

In this letter, we proposed a fully distributed algorithm for the problem of the resilient submodular action selection. We proved that the solution of the proposed algorithm converges to the corresponding centralized algorithm. We evaluated the algorithm's performance through extensive simulations. Directions for future work include exploiting connectivity of the system when communications are attacked, investigating the different importance of the robots (nodes) when the system is attacked, and revisiting our problem with noisy motion and perception considered.

\bibliographystyle{IEEEtran}
\bibliography{ref}

\begin{thebibliography}{10}
\providecommand{\url}[1]{#1}
\csname url@rmstyle\endcsname
\providecommand{\newblock}{\relax}
\providecommand{\bibinfo}[2]{#2}
\providecommand\BIBentrySTDinterwordspacing{\spaceskip=0pt\relax}
\providecommand\BIBentryALTinterwordstretchfactor{4}
\providecommand\BIBentryALTinterwordspacing{\spaceskip=\fontdimen2\font plus
\BIBentryALTinterwordstretchfactor\fontdimen3\font minus
  \fontdimen4\font\relax}
\providecommand\BIBforeignlanguage[2]{{%
\expandafter\ifx\csname l@#1\endcsname\relax
\typeout{** WARNING: IEEEtran.bst: No hyphenation pattern has been}%
\typeout{** loaded for the language `#1'. Using the pattern for}%
\typeout{** the default language instead.}%
\else
\language=\csname l@#1\endcsname
\fi
#2}}
\renewcommand\BIBentryALTinterwordstretchfactor{4}

\bibitem{zhou2021multi}
L.~Zhou and P.~Tokekar, ``Multi-robot coordination and planning in uncertain
  and adversarial environments,'' \emph{Current Robotics Reports}, pp. 1--11,
  2021.

\bibitem{saulnier2017resilient}
K.~Saulnier, D.~Saldana, A.~Prorok, G.~J. Pappas, and V.~Kumar, ``Resilient
  flocking for mobile robot teams,'' \emph{{IEEE} Robot. Autom. Letter},
  vol.~2, no.~2, pp. 1039--1046, 2017.

\bibitem{gil2017guaranteeing}
S.~Gil, S.~Kumar, M.~Mazumder, D.~Katabi, and D.~Rus, ``Guaranteeing
  spoof-resilient multi-robot networks,'' \emph{Auton. Robots}, vol.~41, no.~6,
  pp. 1383--1400, 2017.

\bibitem{mitra2019resilient}
A.~Mitra, J.~A. Richards, S.~Bagchi, and S.~Sundaram, ``Resilient distributed
  state estimation with mobile agents: {O}vercoming {B}yzantine adversaries,
  communication losses, and intermittent measurements,'' \emph{Auton. Robots},
  vol.~43, no.~3, pp. 743--768, 2019.

\bibitem{wardega2019masquerade}
K.~Wardega, R.~Tron, and W.~Li, ``Masquerade attack detection through
  observation planning for multi-robot systems,'' in \emph{Proc. Int. Conf.
  Auton. Agents Multi. Syst.}, 2019, pp. 2262--2264.

\bibitem{raymond2008denial}
D.~R. Raymond and S.~F. Midkiff, ``Denial-of-service in wireless sensor
  networks: Attacks and defenses,'' \emph{IEEE Pervasive Comput.}, vol.~7,
  no.~1, pp. 74--81, 2008.

\bibitem{tzoumas2017resilient}
V.~Tzoumas, K.~Gatsis, A.~Jadbabaie, and G.~J. Pappas, ``Resilient monotone
  submodular function maximization,'' in \emph{Proc. {IEEE} Conf. Decis.
  Control}, 2017, pp. 1362--1367.

\bibitem{zhou2018resilient}
L.~Zhou, V.~Tzoumas, G.~J. Pappas, and P.~Tokekar, ``Resilient active target
  tracking with multiple robots,'' \emph{{IEEE} Robot. Autom. Letter}, vol.~4,
  no.~1, pp. 129--136, 2018.

\bibitem{shi2020robust}
G.~Shi, L.~Zhou, and P.~Tokekar, ``Robust multiple-path orienteering problem:
  Securing against adversarial attacks,'' in \emph{Proc. Robot.: Sci. Syst.},
  2020.

\bibitem{zhou2020distributed}
L.~Zhou, V.~Tzoumas, G.~J. Pappas, and P.~Tokekar, ``Distributed attack-robust
  submodular maximization for multi-robot planning,'' in \emph{Proc. {IEEE}
  Int. Conf. Robot. Autom.}, 2020, pp. 2479--2485.

\bibitem{choi2009consensus}
H.-L. Choi, L.~Brunet, and J.~P. How, ``Consensus-based decentralized auctions
  for robust task allocation,'' \emph{{IEEE} Trans. Robot.}, vol.~25, no.~4,
  pp. 912--926, 2009.

\bibitem{qu2019distributed}
G.~Qu, D.~Brown, and N.~Li, ``Distributed greedy algorithm for multi-agent task
  assignment problem with submodular utility functions,'' \emph{Automatica},
  vol. 105, pp. 206--215, 2019.

\bibitem{williams2017decentralized}
R.~K. Williams, A.~Gasparri, and G.~Ulivi, ``Decentralized matroid optimization
  for topology constraints in multi-robot allocation problems,'' in \emph{Proc.
  IEEE Int. Conf. Robot. Autom.}, 2017, pp. 293--300.

\bibitem{liu2020monitoring}
J.~Liu and R.~K. Williams, ``Monitoring over the long term: Intermittent
  deployment and sensing strategies for multi-robot teams,'' in \emph{Proc.
  IEEE Int. Conf. Robot. Autom.}, 2020, pp. 7733--7739.

\bibitem{liu2020coupled}
J.~Liu and R.~K. Williams, ``Coupled temporal and spatial environment
  monitoring for multi-agent teams in precision farming,'' in \emph{IEEE Conf.
  Control Technol. Appl.}, 2020, pp. 273--278.

\bibitem{corah2019distributed}
M.~Corah and N.~Michael, ``Distributed matroid-constrained submodular
  maximization for multi-robot exploration: Theory and practice,'' \emph{Auton.
  Robots}, vol.~43, no.~2, pp. 485--501, 2019.

\bibitem{liu2019submodular}
J.~Liu and R.~K. Williams, ``Submodular optimization for coupled task
  allocation and intermittent deployment problems,'' \emph{IEEE Robot. Autom.
  Letter}, vol.~4, no.~4, pp. 3169--3176, 2019.

\bibitem{grimsman2018impact}
D.~Grimsman, M.~S. Ali, J.~P. Hespanha, and J.~R. Marden, ``The impact of
  information in distributed submodular maximization,'' \emph{IEEE Trans.
  Control Netw. Syst.}, vol.~6, no.~4, pp. 1334--1343, 2018.

\bibitem{mackin2018submodular}
E.~Mackin and S.~Patterson, ``Submodular optimization for consensus networks
  with noise-corrupted leaders,'' \emph{IEEE Trans. Autom. Control}, vol.~64,
  no.~7, pp. 3054--3059, 2018.

\bibitem{clark2013supermodular}
A.~Clark, L.~Bushnell, and R.~Poovendran, ``A supermodular optimization
  framework for leader selection under link noise in linear multi-agent
  systems,'' \emph{IEEE Trans. Autom. Control}, vol.~59, no.~2, pp. 283--296,
  2013.

\bibitem{nemhauser1978analysis}
G.~L. Nemhauser, L.~A. Wolsey, and M.~L. Fisher, ``An analysis of
  approximations for maximizing submodular set functions—{I},'' \emph{Math.
  Program.}, vol.~14, no.~1, pp. 265--294, 1978.

\bibitem{krause2014submodular}
A.~Krause and D.~Golovin, ``Submodular function maximization,'' in
  \emph{Tractability: Practical Approaches to Hard Problems}. Cambridge
  University Press, 2014, pp. 71--104.

\bibitem{schrijver2003combinatorial}
A.~Schrijver, \emph{Combinatorial optimization: polyhedra and efficiency}.
  Springer Science \& Business Media, 2003, vol.~24.

\bibitem{conforti1984submodular}
M.~Conforti and G.~Cornu{\'e}jols, ``Submodular set functions, matroids and the
  greedy algorithm: tight worst-case bounds and some generalizations of the
  rado-edmonds theorem,'' \emph{Discrete Appl. Math.}, vol.~7, no.~3, pp.
  251--274, 1984.

\end{thebibliography}

\end{document}